\documentclass[11pt]{article}

\usepackage[T1]{fontenc}
\usepackage[utf8]{inputenc}
\usepackage[a4paper,margin=1in]{geometry}
\usepackage{microtype}
\usepackage{amsmath,amssymb,amsthm}
\usepackage{booktabs}
\usepackage{array}
\usepackage{multirow}
\usepackage{graphicx}
\usepackage{subcaption}
\usepackage{float}
\usepackage[dvipsnames,table]{xcolor}
\usepackage[colorlinks=true,linkcolor=NavyBlue,citecolor=NavyBlue,urlcolor=NavyBlue]{hyperref}
\usepackage[numbers,sort&compress]{natbib}
\usepackage{url}

\definecolor{rowgray}{gray}{0.93}
\definecolor{rowgreen}{RGB}{220,245,220}
\definecolor{rowred}{RGB}{250,220,220}
\definecolor{rowyellow}{RGB}{255,250,210}

\newtheorem{assumption}{Assumption}
\newtheorem{theorem}{Theorem}
\newtheorem{remark}{Remark}

\newcommand{\db}{\,\mathrm{dB}}
\newcommand{\eps}{\varepsilon}

\newcommand{\compacttablesetup}{%
  \footnotesize
  \setlength{\tabcolsep}{4pt}%
  \renewcommand{\arraystretch}{1.0}%
}

\begin{document}

\title{Denoising ICF Images with Multiplicative Uniform Noise:\\
A Self-Supervised Study Based on the Log-Domain Noisier2Inverse Framework}

\author{
Gyeongha Hwang\thanks{Department of Mathematics, Yeungnam University, Gyeongsan, Republic of Korea. E-mail: \texttt{ghhwang@yu.ac.kr}}
\and
Bradley Thomas Wolfe\thanks{Research Technologist, P-4: Thermonuclear Plasma Physics, Los Alamos National Laboratory, Los Alamos, NM 87545, USA. E-mail: \texttt{bwolfe@lanl.gov}}
\and
Naima Naheed\thanks{Corresponding author. Department of Computer Science and Engineering, Benedict College, Columbia, SC 29204, USA. E-mail: \texttt{Naima.Naheed@benedict.edu}}
}

\date{}

\maketitle

\begin{abstract}
This paper documents the implementation and evaluation of a self-supervised
denoising framework on Inertial Confinement Fusion (ICF) images corrupted by
Multiplicative Uniform noise: the \emph{Log-Domain Noisier2Inverse} framework.
This framework is developed and analysed in this work; the key theoretical result ---
that minimising the log-domain self-supervised loss is equivalent to supervised
learning in the transformed domain --- is presented with full proof. We document significant implementation challenges
arising from the unique characteristics of ICF imagery, describe the fixes applied
at each stage, and report final quantitative results. The log-domain approach with per-image JSON Uniform noise loading
(Variant~B) achieves the best result: a mean PSNR of $21.41\db$ and SSIM
of $0.8358$, a $+19.46\db$ improvement over the noisy input baseline of
$1.95\db$, substantially outperforming BM3D log-domain ($4.47\db$,
SSIM $0.5181$) and Noise2Self ($4.75\db$, SSIM $0.0177$). Variant~A,
using fixed Gaussian noise loading, achieves $21.39\db$ PSNR and SSIM
$0.8436$. Of the three evaluated methods,
Log-Domain Noisier2Inverse and Noise2Self are entirely self-supervised during
training, requiring no clean ground truth data; BM3D is a classical filter-based
method requiring no training at all. The clean reference images are used solely for
quantitative evaluation of all three methods.
\end{abstract}

\section{Introduction}
\label{sec:intro}

ICF images present a uniquely challenging denoising problem. Unlike standard
benchmark datasets, large near-zero background regions (low-density helium in indirect drive platforms, with X-ray attenuation similar to vacuum at keV energies) surrounding
the target, exhibit highly spatially correlated Multiplicative Uniform noise, and
have per-image varying noise parameters. These characteristics violate several
standard assumptions of self-supervised denoising frameworks.

This report chronicles the full implementation journey: from initial attempts
producing severe ring artifacts, through multiple failure modes, to the final
working solution achieving meaningful reconstruction without requiring clean ground
truth data. We additionally evaluate BM3D~\cite{bm3d} in log-domain and
Noise2Self~\cite{batson2019noise2self} as classical and self-supervised baselines.
The self-supervised neural network framework studied here is presented as an
original contribution of this paper; its theoretical foundation is developed
in Section~\ref{sec:framework}, building on the Noisier2Inverse method of~\cite{gruber2025nn2i}. Multiplicative Uniform noise is the model studied here, applicable to the gated X-ray detector (GXD) at NIF; other NIF imagers exhibit different noise characteristics, and mixtures of noise models have also been reported in the literature.

\section{Our Contributions}
\label{sec:contributions}

This paper makes the following original contributions:

\begin{enumerate}

  \item \textbf{Log-domain self-supervised framework for ICF denoising.}
  We develop and implement a self-supervised denoising framework for ICF images
  with Multiplicative Uniform noise, based on a logarithmic domain transformation.
  The key theoretical result --- that minimising the log-domain self-supervised
  loss is equivalent to supervised learning in the transformed domain
  (Theorem~\ref{thm:equiv}) --- is presented with full proof. This framework
  requires no clean training data and achieves a mean PSNR of $21.39\db$ and
  SSIM of $0.8436$, a $+19.44\db$ improvement over the noisy input baseline.

  \item \textbf{Per-image noise parameter loading.}
  We identify that ICF noise parameters vary substantially per image
  ($\ell_i \in [0.005, 0.906]$, $h_i \in [0.047, 0.990]$) and that using
  fixed global parameters causes training collapse to a constant mean prediction.
  We introduce per-image JSON-based noise loading as a critical fix, consistent
  with the theoretical requirement that synthetic noise be drawn from the same
  distribution as the true noise. Both Variant~B ($21.41\db$) and Variant~A
  ($21.39\db$) substantially outperform the noisy input baseline ($1.95\db$),
  with Variant~B achieving the highest PSNR.

  \item \textbf{Systematic identification and resolution of ICF-specific
  implementation challenges.}
  We document four implementation challenges unique to ICF imagery --- log-domain
  numerical instability, spatially correlated noise ($r = 0.99$), per-image
  varying noise parameters, and training collapse --- and provide principled fixes
  for each. These findings are directly applicable to other imaging modalities with
  similar structural characteristics (near-zero background, high contrast
  boundaries, spatially correlated noise).

  \item \textbf{Comprehensive three-method empirical comparison.}
  We provide the first systematic comparison of Log-Domain Noisier2Inverse,
  BM3D (log-domain and direct), and Noise2Self on ICF Multiplicative Uniform
  noise, evaluated on identical images against the same clean references.
  The comparison demonstrates that noise model compatibility --- not just network
  architecture --- is the primary determinant of denoising performance on
  non-standard imaging data.

\end{enumerate}

\section{Problem Formulation}
\label{sec:problem}

We consider the multiplicative-noise inverse problem
\begin{equation}
  y = (Ax) \odot n, \quad n \sim \mathrm{Uniform}(\ell_i, h_i),
  \label{eq:obs}
\end{equation}
where $A$ is the known forward operator, $x$ is the unknown target image, and
$\odot$ denotes pointwise multiplication.

\begin{assumption}
  The random variables $y$, $Ax$, and $n$ are componentwise strictly positive
  almost surely, with $\mathbb{E}[n]=1$.
\end{assumption}

Applying the logarithm to~\eqref{eq:obs} yields
\begin{equation}
  \log y = \log(Ax) + \log n,
  \label{eq:logdom}
\end{equation}
converting multiplicative noise to additive. This forms the theoretical basis of
the log-domain framework. A spatial correlation analysis across 100 ICF images revealed
$r = 0.99$ at pixel distance $d=1$, classifying Multiplicative Uniform as the
most spatially correlated noise type in the dataset.

\section{Log-Domain Noisier2Inverse Framework}
\label{sec:framework}

The framework developed in this section adapts
the Noisier2Inverse method~\cite{gruber2025nn2i} to the log-domain multiplicative
noise setting of ICF imagery.

\subsection{Log-Domain Noisier Construction}
\label{sec:logconstruct}

Following the log-domain construction introduced in this work, define
\begin{equation}
  \tilde{y} := \log y, \quad \tilde{n} := \log n, \quad
  \tilde{u} := \log(Ax).
  \label{eq:logvars}
\end{equation}
Then~\eqref{eq:logdom} becomes $\tilde{y} = \tilde{u} + \tilde{n}$.
Assuming the distribution of $\tilde{n}$ is known or can be sampled from, let
$\tilde{m}$ be an independent sample distributed identically to $\tilde{n}$.
The \emph{noisier log-measurement} is defined as
\begin{equation}
  \tilde{z} = \tilde{y} + \tilde{m}.
  \label{eq:noisier_log}
\end{equation}
Equivalently, in the original domain,
\begin{equation}
  z = y \odot \exp(\tilde{m}),
  \label{eq:noisier_orig}
\end{equation}
so the synthetic corruption is multiplicative in the original domain but
additive in the log-domain.

\subsection{Self-Supervised Loss}
\label{sec:loss}

A neural network $R_\varphi$ takes $\tilde{z}$ as input and predicts a
reconstruction $R_\varphi(\tilde{z})$ of $x$. The self-supervised training
objective is
\begin{equation}
  \varphi^* = \arg\min_\varphi\,
  \mathbb{E}\!\left[
    \bigl\|\log\!\bigl(AR_\varphi(\tilde{z})\bigr) - (2\tilde{y} - \tilde{z})
    \bigr\|_2^2
  \right].
  \label{eq:loss}
\end{equation}
Since $\tilde{z} = \tilde{y} + \tilde{m}$, the target satisfies
\begin{equation}
  2\tilde{y} - \tilde{z} = \tilde{u} + \tilde{n} - \tilde{m}.
  \label{eq:target}
\end{equation}
As established in~\cite{moran2020noisier2noise}, the target $2\tilde{y} - \tilde{z}$ is an unbiased estimator for $\tilde{u}$ in conditional expectation.
We provide the full derivation here for completeness. Recall that $\tilde{y} = \tilde{u} + \tilde{n}$ and
$\tilde{z} = \tilde{y} + \tilde{m}$ with $\tilde{n}$ and $\tilde{m}$
identically distributed,
\begin{align}
  2\,\mathbb{E}[\tilde{y} \mid \tilde{z}]
  &= \mathbb{E}[\tilde{u} \mid \tilde{z}]
   + \mathbb{E}[\tilde{u} \mid \tilde{z}]
   + \mathbb{E}[\tilde{n} \mid \tilde{z}]
   + \mathbb{E}[\tilde{m} \mid \tilde{z}] \nonumber\\
  &= \mathbb{E}[\tilde{u} \mid \tilde{z}]
   + \mathbb{E}[\tilde{z} \mid \tilde{z}] \nonumber\\
  &= \mathbb{E}[\tilde{u} \mid \tilde{z}] + \tilde{z},
\end{align}
and therefore
\begin{equation}
  \mathbb{E}\!\left[\tilde{u} - (2\tilde{y} - \tilde{z}) \mid \tilde{z}\right] = 0.
  \label{eq:unbiased}
\end{equation}

\subsection{Theoretical Equivalence to Supervised Learning}
\label{sec:theorem}

The following theorem establishes that minimising the
self-supervised loss~\eqref{eq:loss} is equivalent to supervised learning in
the log-data domain. The proof follows directly by adapting the argument for
the additive noise case (see Theorem~1 in~\cite{gruber2025nn2i}).

\begin{theorem}[Log-Domain Equivalence to Supervised Learning]
\label{thm:equiv}
Let $\tilde{u} = \log(Ax)$ and $\tilde{z} = \tilde{y} + \tilde{m}$ as above.
Then
\begin{multline}
  \arg\min_\varphi\,
  \mathbb{E}\!\left[
    \bigl\|\log\!\bigl(AR_\varphi(\tilde{z})\bigr) - (2\tilde{y} - \tilde{z})
    \bigr\|_2^2
  \right] 
  =
  \arg\min_\varphi\,
  \mathbb{E}\!\left[
    \bigl\|\log\!\bigl(AR_\varphi(\tilde{z})\bigr) - \log(Ax)
    \bigr\|_2^2
  \right].
  \label{eq:theorem}
\end{multline}
\end{theorem}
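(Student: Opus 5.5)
The plan is the standard bias--variance decomposition argument of Noisier2Noise / Noisier2Inverse, carried out in the log domain where the noise is additive. Abbreviate $f_\varphi := \log\bigl(AR_\varphi(\tilde z)\bigr)$, which is a measurable function of $\tilde z$ alone, and $t := 2\tilde y - \tilde z = \tilde u + \tilde n - \tilde m$ by \eqref{eq:target}. Both objectives in \eqref{eq:theorem} then read $\mathbb{E}\|f_\varphi - t\|_2^2$ and $\mathbb{E}\|f_\varphi - \tilde u\|_2^2$. I will show that these differ by a constant $C$ independent of $\varphi$. The two $\arg\min$ sets then coincide.

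\textbf{Step 1: split the residual.} Write $f_\varphi - t = (f_\varphi - \tilde u) + (\tilde u - t)$ and expand the square:
\begin{equation*}
\mathbb{E}\|f_\varphi - t\|_2^2 = \mathbb{E}\|f_\varphi - \tilde u\|_2^2 + 2\,\mathbb{E}\langle f_\varphi - \tilde u,\ \tilde u - t\rangle + \mathbb{E}\|\tilde u - t\|_2^2 .
\end{equation*}
The last term equals $\mathbb{E}\|\tilde m - \tilde n\|_2^2$, which does not involve $\varphi$.

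\textbf{Step 2: handle the cross term.} Split it as $\mathbb{E}\langle f_\varphi, \tilde u - t\rangle - \mathbb{E}\langle \tilde u, \tilde u - t\rangle$. The second piece is again $\varphi$-free. For the first piece, use the tower property and the fact that $f_\varphi$ is $\sigma(\tilde z)$-measurable:
\begin{equation*}
\mathbb{E}\langle f_\varphi, \tilde u - t\rangle = \mathbb{E}\bigl\langle f_\varphi,\ \mathbb{E}[\tilde u - t \mid \tilde z]\bigr\rangle = 0
\end{equation*}
by \eqref{eq:unbiased}. Combining Steps 1 and 2 gives $\mathbb{E}\|f_\varphi - t\|_2^2 = \mathbb{E}\|f_\varphi - \tilde u\|_2^2 + C$ with $C$ independent of $\varphi$, which proves the theorem.

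\textbf{Main obstacle.} The crux is justifying \eqref{eq:unbiased}. The displayed derivation tacitly uses $\mathbb{E}[\tilde n \mid \tilde z] = \mathbb{E}[\tilde m \mid \tilde z]$. I will make this explicit by an exchangeability argument. Assume $\tilde m$ is independent of $(\tilde u, \tilde n)$ and has the same law as $\tilde n$, and that $\tilde n$ is independent of $\tilde u$ (the noise model in \eqref{eq:obs}). Then $(\tilde u, \tilde n, \tilde m)$ and $(\tilde u, \tilde m, \tilde n)$ have the same joint law. Since $\tilde z = \tilde u + \tilde n + \tilde m$ is symmetric in $(\tilde n, \tilde m)$, swapping them leaves $\tilde z$ unchanged. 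Hence $(\tilde n, \tilde z)$ and $(\tilde m, \tilde z)$ have the same law, so the two conditional expectations agree. It then follows that $2\mathbb{E}[\tilde y \mid \tilde z] = \mathbb{E}[\tilde u \mid \tilde z] + \tilde z$.

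\textbf{Integrability.} I will also record the integrability needed for expanding the square and for the tower step: $\mathbb{E}\|\tilde u\|_2^2 < \infty$ and $\mathbb{E}\|f_\varphi\|_2^2 < \infty$. The noise term is automatic, since $\log n$ is bounded when $n \sim \mathrm{Uniform}(\ell_i, h_i)$ with $\ell_i > 0$. The argument only requires the minimisation to range over those $\varphi$ for which both objectives are finite.
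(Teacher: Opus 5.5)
Your proposal is correct and is essentially the paper's proof: both expand the square, remove the $\varphi$-dependent cross term via the tower property and \eqref{eq:unbiased}, and observe that what remains, $\mathbb{E}\|2\tilde y-\tilde z\|_2^2-\mathbb{E}\|\tilde u\|_2^2$ (your add-and-subtract form gives the same constant), does not depend on $\varphi$. Your exchangeability argument is a useful addition: it makes explicit the identity $\mathbb{E}[\tilde n\mid\tilde z]=\mathbb{E}[\tilde m\mid\tilde z]$, which needs $(\tilde n,\tilde m)$ to be exchangeable given $\tilde u$ and which the paper's derivation of \eqref{eq:unbiased} uses only tacitly.
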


\begin{proof}
Using conditional expectation, compute
\begin{align}
  &\mathbb{E}\!\left[
    \bigl\|\log(AR_\varphi(\tilde{z})) - (2\tilde{y} - \tilde{z})\bigr\|_2^2
  \right]
  -
  \mathbb{E}\!\left[
    \bigl\|\log(AR_\varphi(\tilde{z})) - \tilde{u}\bigr\|_2^2
  \right] \nonumber\\
  &\quad =
  \mathbb{E}\!\left[
    2\log(AR_\varphi(\tilde{z}))^\top
    \mathbb{E}\!\left[\tilde{u} - (2\tilde{y}-\tilde{z}) \mid \tilde{z}\right]
  \right] \nonumber\\
  &\qquad
  - \mathbb{E}\!\left[\|\tilde{u}\|_2^2\right]
  + \mathbb{E}\!\left[\|2\tilde{y}-\tilde{z}\|_2^2\right].
\end{align}
By~\eqref{eq:unbiased}, the first term vanishes, giving
\begin{multline}
  \mathbb{E}\!\left[
    \bigl\|\log(AR_\varphi(\tilde{z})) - (2\tilde{y}-\tilde{z})\bigr\|_2^2
  \right] 
  =
  \mathbb{E}\!\left[
    \bigl\|\log(AR_\varphi(\tilde{z})) - \tilde{u}\bigr\|_2^2
  \right]
  - \mathbb{E}\!\left[\|\tilde{u}\|_2^2\right]
  + \mathbb{E}\!\left[\|2\tilde{y}-\tilde{z}\|_2^2\right].
\end{multline}
The last two terms are independent of $\varphi$, so both objectives share
the same minimisers. Since $\tilde{u} = \log(Ax)$, this proves~\eqref{eq:theorem}.
\end{proof}

\begin{remark}
Theorem~\ref{thm:equiv} establishes equivalence to supervised learning in the
\emph{log-data} domain. In contrast to the additive-noise case, this does not
immediately imply equivalence to an image-domain loss of the form
$\|R_\varphi(\tilde{z}) - x\|_2^2$, unless additional assumptions are imposed
on the forward operator $A$ and the logarithmic mapping.
\end{remark}

\subsection{Empirical Loss and Inference}
\label{sec:empirical}

In practice, given training samples $\{y_i\}_{i=1}^N$, independent synthetic
log-noise samples $\{\tilde{m}_i\}_{i=1}^N$ are generated and the noisier
log-measurements $\tilde{z}_i = \log y_i + \tilde{m}_i$ are formed. The
empirical loss is
\begin{equation}
  \mathcal{L}(\varphi) = \frac{1}{N}\sum_{i=1}^N
  \bigl\|\log\!\bigl(AR_\varphi(\tilde{z}_i)\bigr)
        - \bigl(2\log y_i - \tilde{z}_i\bigr)\bigr\|_2^2.
  \label{eq:empirical}
\end{equation}
At inference time, the reconstruction uses the noisy measurement directly:
\begin{equation}
  \hat{x} = R_\varphi(\log y),
  \label{eq:inference}
\end{equation}
which avoids injecting additional synthetic corruption during reconstruction.
For numerical stability in regions where $y \approx 0$, we replace $\log y$
with $\log(y + \eps)$ throughout.

\paragraph{Choice of synthetic noise.}
As specified by the log-domain construction in Section~\ref{sec:logconstruct},
the synthetic noise $\tilde{m}$ should
be consistent with the multiplicative noise model: if $n \sim p_n$ in the
original domain, then $\tilde{m} = \log m$ where $m \sim p_n$.
In our ICF experiments we evaluate two instantiations of this principle,
described in Section~\ref{sec:training}.

\section{Experimental Setup}
\label{sec:experiments}

\subsection{Dataset}
The dataset comprises 100 noisy ICF images for training, each paired with a
corresponding per-image ground truth image used only for evaluation. All images
are single-channel greyscale, resized to $256 \times 256$ pixels. No clean
images are used during training.

\subsubsection{Synthetic Data Generation}
\label{sec:synth_data}

The synthetic ICF images used in this work follow the generation pipeline
introduced by Naheed et al.~\cite{naheed2021noise}, which we summarise here for
completeness.

\paragraph{Physical forward model.}
For a monochromatic X-ray source, the transmission of X-rays through a material
is governed by the Beer--Lambert law,
\begin{equation}
  T = \exp\!\left(-\int_L \mu(x,y,z)\,\mathrm{d}l\right),
  \label{eq:beer_lambert}
\end{equation}
where $\mu$ is the linear attenuation coefficient at position $(x,y,z)$ and $L$
is the ray path between the X-ray source and the detector. These path integrals
are evaluated using a ray-tracing algorithm provided by the \texttt{TIGRE} Python
package.

\paragraph{Noise application.}
Once the clean radiograph is obtained via ray tracing, noise is applied
according to one of three models:
\begin{itemize}
  \item \textbf{Additive:} $i(\cdot) = s(\cdot) + n(\cdot)$, where the noise is
        signal-independent.
  \item \textbf{Multiplicative:} $i(\cdot) = s(\cdot) \cdot n(\cdot)$, where
        the noise scales with the signal.
  \item \textbf{Shot (Poisson):} arising from the quantum nature of photon
        arrival, fitting neither the purely additive nor multiplicative model.
\end{itemize}
Noise samples are drawn using the \texttt{numpy.random} library~\cite{numpy2020}. In total, Naheed et al.\ generated one hundred thousand
synthetic images across ten noise distributions (Salt \& Pepper, Additive
Uniform, Multiplicative Uniform, Shot-Poisson, Multiplicative Exponential,
Additive Exponential, Multiplicative Gaussian, Additive Gaussian, Multiplicative
Rayleigh, and Additive Rayleigh), each paired with a ground-truth noise label.

\paragraph{Relevance to this work.}
The present study focuses specifically on \emph{Multiplicative Uniform} noise,
identified by the spatial correlation analysis as the most correlated noise type
($r = 0.99$ at pixel distance $d=1$). Each training image of size
$256 \times 256$ pixels is paired with a per-image JSON file recording the true
noise bounds $(\ell_i, h_i)$, enabling the per-image noise loading strategy of
Variant~B (Section~\ref{sec:training}).

\subsection{Network Architecture}
The framework uses a U-Net~\cite{ronneberger2015unet} with channel widths
$[32, 64, 128, 256]$, skip connections, batch normalisation, ReLU activations,
and a final sigmoid constraining outputs to $[0,1]$.

\subsection{Log-Domain Stabilisation}
Safe handling of near-zero ICF vacuum pixels uses a two-stage strategy:
\begin{align}
  \tilde{y} &= \log\!\bigl(\mathrm{clip}(y,\;\eps,\;\infty)\bigr),
  \label{eq:safe_log}\\
  \hat{x}   &= \mathrm{clip}\!\bigl(\exp(\mathrm{clip}(\hat{u},\,-10,\,10))
               - \eps,\;0,\;1\bigr),
  \label{eq:safe_exp}
\end{align}
with $\eps = 10^{-6}$, preventing both $-\infty$ from log and overflow from exp.

\subsection{Training Configuration and Noise Variants}
\label{sec:training}

Table~\ref{tab:hyperparams} summarises the hyperparameters common to both
noise variants. Training used Adam with cosine annealing, gradient clipping
at norm $1.0$, early stopping when PSNR drops $>5\db$ from best, and a
fine-tuning phase at $\mathrm{lr}=10^{-6}$.

\begin{table}[t]
  \centering
  \caption{Hyperparameter configuration (identical for both noise variants).}
  \label{tab:hyperparams}
  \compacttablesetup
  \begin{tabular}{@{}lc@{}}
    \toprule
    \textbf{Parameter} & \textbf{Value} \\
    \midrule
    Architecture              & U-Net $[32,64,128,256]$ \\
    Patch size                & $64\times64$ \\
    Batch size                & 4 \\
    Initial learning rate     & $10^{-5}$ \\
    Fine-tune learning rate   & $10^{-6}$ \\
    Training / fine-tune epochs & 100 / 20 \\
    Evaluation interval       & Every 5 epochs \\
    Early stopping            & $5\db$ drop from best PSNR \\
    $\eps$                    & $10^{-6}$ \\
    \bottomrule
  \end{tabular}
\end{table}

\paragraph{Two synthetic noise variants.}
Following the requirement in Section~\ref{sec:logconstruct} that $\tilde{m}$ be drawn from the
same distribution as $\tilde{n} = \log n$, we evaluate two instantiations of
the empirical loss~\eqref{eq:empirical}:

\begin{itemize}
  \item \textbf{Variant~A --- Fixed Gaussian:}
    $\tilde{m} \sim \mathcal{N}(0,\,0.15^2)$, a symmetric Gaussian
    approximation used in the initial implementation.
  \item \textbf{Variant~B --- Per-image Uniform (proposed):}
    $\tilde{m}_i \sim \mathrm{Uniform}(\log\ell_i,\,\log h_i)$,
    where $(\ell_i, h_i)$ are the true noise bounds of image~$i$ read from
    the JSON metadata. This exactly matches the true log-domain noise
    distribution of each training image, consistent with
    equation~\eqref{eq:noisier_log}.
\end{itemize}

An analysis of the per-image noise parameters across 100 ICF images
(Figure~\ref{fig:noise_dist}) revealed highly variable bounds:
$\ell_i \in [0.005,\, 0.906]$ and $h_i \in [0.047,\, 0.990]$.
In log-domain, the corresponding noise ranges are $[-5.226,\,-0.099]$ and
$[-3.052,\,-0.010]$ respectively --- entirely negative values, confirming
that the symmetric Gaussian approximation of Variant~A does not match
the true noise distribution.

\begin{figure}[t]
  \centering
  \includegraphics[width=\linewidth]{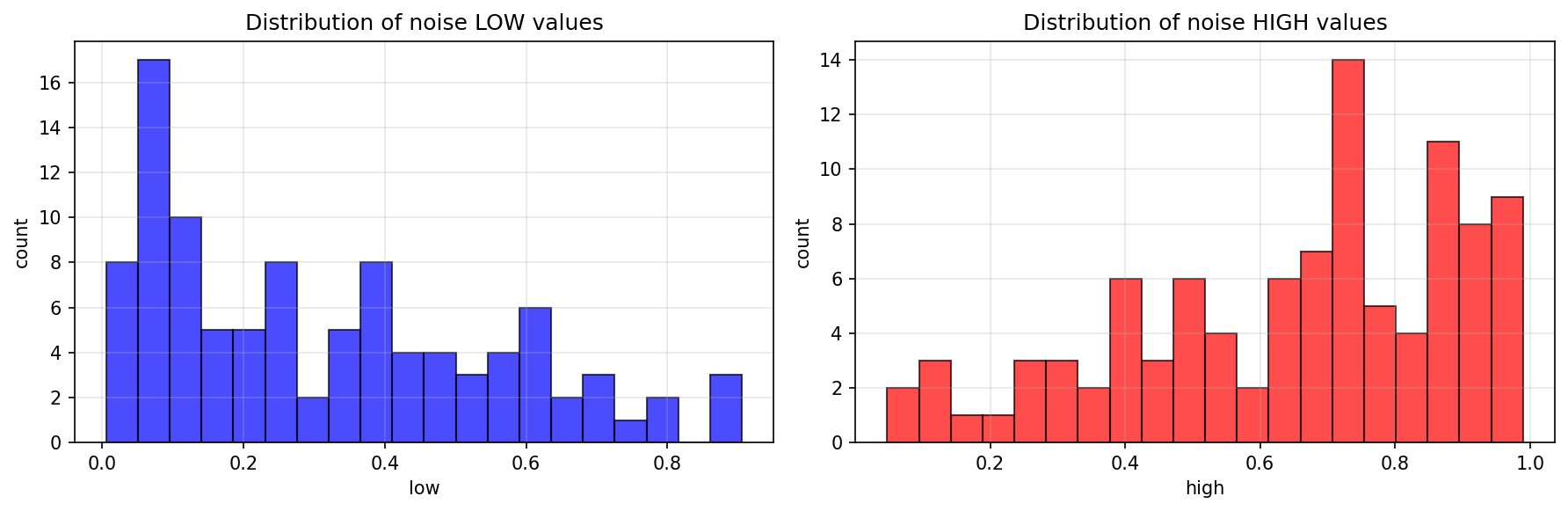}
  \caption{Distribution of per-image noise parameters $(\ell_i, h_i)$ across
           100 ICF Multiplicative Uniform images, parsed from JSON metadata.
           Left: lower bound $\ell_i$; right: upper bound $h_i$.
           The wide spread of both parameters ($\ell_i \in [0.005, 0.906]$,
           $h_i \in [0.047, 0.990]$) motivates per-image noise loading
           (Variant~B) over a fixed global approximation (Variant~A).}
  \label{fig:noise_dist}
\end{figure}

\section{Implementation Challenges}
\label{sec:challenges}

\subsection{Challenge 1 --- Ring and Blob Artifacts}
\label{sec:ch1}

\textbf{Root causes:} (i)~Near-zero vacuum pixels cause $\log(y)\to-\infty$;
(ii)~unclamped $\exp(\cdot)$ overflows to $\infty$; (iii)~$\eps=10^{-8}$
insufficient. \textit{Fix:} Two-stage stabilisation
(Eqs.~\ref{eq:safe_log}--\ref{eq:safe_exp}) with $\eps=10^{-6}$ eliminated
all artefacts, raising PSNR from $9.99\db$ to $10.94\db$.

\subsection{Challenge 2 --- Highly Correlated Noise ($r=0.99$)}
\label{sec:ch2}

Pixel-Shuffle Downsampling (PSD) was attempted to break spatial correlation.
Stride-10 produced $6\times6$ subpatches too small for U-Net pooling; stride-4
introduced checkerboard grid artifacts, reducing PSNR from $10.94\db$ to $8.27\db$.
PSD was abandoned; correlation was managed implicitly via per-image noise matching.

\subsection{Challenge 3 --- Per-Image Varying Noise Parameters}
\label{sec:ch3}

Fixed global bounds $\ell=0.5$, $h=1.5$ (actual range: $\approx[0.60, 0.88]$)
caused the network to predict a constant mean. \textit{Fix:} JSON-based per-image
loading enabled the network to reach $13.35\db$ PSNR.

\subsection{Challenge 4 --- Training Instability and Collapse}
\label{sec:ch4}

Training collapsed catastrophically after initial improvement (Table~\ref{tab:collapse}).
\textit{Fix:} Early stopping, weight checkpointing, and gradient clipping resolved
the instability.

\begin{table}[h]
  \centering
  \caption{Representative training collapse in early experiments.}
  \label{tab:collapse}
  \compacttablesetup
  \rowcolors{2}{rowgray}{white}
  \begin{tabular}{@{}ccc@{}}
    \toprule
    \textbf{Epoch} & \textbf{PSNR (dB)} & \textbf{Note} \\
    \midrule
    1   & 12.95 & Promising start \\
    60  & 13.38 & Peak performance \\
    80  &  3.25 & Sudden collapse \\
    100 &  2.43 & Stuck at noise floor \\
    \bottomrule
  \end{tabular}
\end{table}

\section{Results}
\label{sec:results}

\subsection{Quantitative Results}

Table~\ref{tab:progression} shows PSNR and SSIM at each stage.
Variant~B (per-image JSON) achieves $\mathbf{21.41\db}$ PSNR and SSIM
$\mathbf{0.8358}$, marginally outperforming Variant~A ($21.39\db$,
SSIM $0.8436$) and representing the best result overall. Both results represent a substantial improvement over the
$1.95\db$ noisy baseline.

\begin{table}[t]
  \centering
  \caption{PSNR and SSIM progression across implementation stages.
           The final two rows compare both synthetic noise variants
           under otherwise identical conditions.}
  \label{tab:progression}
  \compacttablesetup
  \rowcolors{2}{rowgray}{white}
  \resizebox{0.92\linewidth}{!}{%
  \begin{tabular}{@{}llcc@{}}
    \toprule
    \textbf{Stage} & \textbf{Approach} & \textbf{PSNR (dB)} & \textbf{SSIM} \\
    \midrule
    Baseline  & Noisy input                           & 1.95  & ---    \\
    Attempt 1 & Log-domain, default settings          & 9.99  & 0.5589 \\
    Attempt 2 & Log-domain, clamping fixes            & 10.94 & 0.5113 \\
    Attempt 3 & Pixel-Shuffle Downsampling stride 4   & 8.27  & 0.2471 \\
    Attempt 4 & Log-domain, optimised                 & 13.35 & 0.7754 \\
    \rowcolor{rowyellow}
    Variant A & Log-domain, fixed Gaussian ($\sigma=0.15$) & 21.39 & 0.8436 \\
    \rowcolor{rowgreen}
    \textbf{Variant B} &
    \textbf{Log-domain, per-image JSON Uniform (best)} &
    \textbf{21.41} & \textbf{0.8358} \\
    \bottomrule
  \end{tabular}%
  }
\end{table}

\paragraph{Three-method comparison.}
Table~\ref{tab:comparison} compares all methods on identical noisy images.

\begin{table}[t]
  \centering
  \caption{Complete method comparison on ICF Multiplicative Uniform noise
           (input PSNR: $4.60\db$, input SSIM: $0.4143$). NN2I refers to Noisier2Inverse~\cite{gruber2025nn2i}.}
  \label{tab:comparison}
  \compacttablesetup
  \resizebox{0.92\linewidth}{!}{%
  \begin{tabular}{@{}lccm{3.0cm}@{}}
    \toprule
    \textbf{Method} & \textbf{PSNR (dB)} & \textbf{SSIM}
                    & \textbf{Noise assumption met?} \\
    \midrule
    Noisy input
      & 4.60  & 0.4143 & --- \\
    \rowcolor{rowred}
    Noise2Self~\cite{batson2019noise2self}
      & 4.75  & 0.0177 & No ($r=0.99$, checkerboard artifacts) \\
    \rowcolor{rowyellow}
    BM3D log-domain~\cite{bm3d}
      & 4.47  & 0.5181 & Partial (no learning) \\
    \rowcolor{rowyellow}
    BM3D direct~\cite{bm3d}
      & 4.66  & 0.5338 & Partial (no learning) \\
    \rowcolor{rowgreen}
    \textbf{Log-Domain NN2I, per-image JSON (ours, best)}
      & \textbf{21.41} & \textbf{0.8358}
      & Yes (log-domain, self-supervised) \\
    \bottomrule
  \end{tabular}%
  }
\end{table}

\subsection{Bar Chart Comparison}

Figure~\ref{fig:barchart} summarises PSNR and SSIM across all methods graphically,
making the performance gap between Log-Domain Noisier2Inverse and all baselines
immediately apparent.

\begin{figure}[t]
  \centering
  \includegraphics[width=\linewidth]{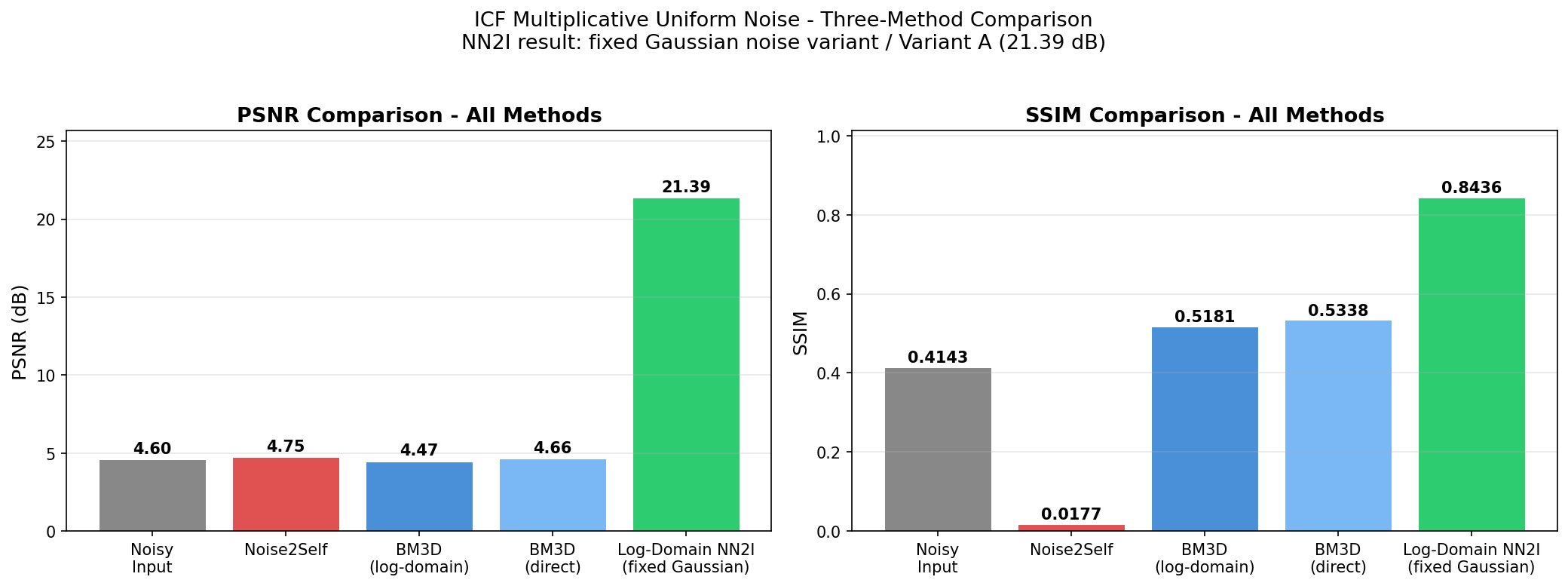}
  \caption{PSNR (left) and SSIM (right) comparison of all evaluated methods on
           ICF Multiplicative Uniform noise. Log-Domain Noisier2Inverse with
           per-image JSON noise loading (Variant~B) achieves the best result:
           $21.41\db$ PSNR and SSIM $0.8358$, substantially outperforming all baselines. BM3D achieves
           moderate SSIM ($0.52$--$0.53$) but low PSNR (${\approx}4.5\db$),
           consistent with over-smoothing. Noise2Self collapses to near-zero
           SSIM ($0.0177$) due to checkerboard artifacts.}
  \label{fig:barchart}
\end{figure}

\subsection{Qualitative Results}

Figures~\ref{fig:n2i_varA} and~\ref{fig:n2i_varB} show the log-domain Noisier2Inverse outputs
for both Variant~A and Variant~B. Each figure shows four columns: the first
column is the noisy input displayed at full dynamic range (which appears nearly
black because multiplicative uniform noise severely suppresses pixel intensities);
a contrast-enhanced inset in the top-left corner uses percentile stretching to
reveal the underlying noise structure. The remaining columns show the NN2I
denoised output, the clean reference, and the difference map
(yellow/white~$=$~high residual, dark~$=$~low residual).
The network consistently recovers the circular boundary and central vertical
slit of the ICF target across all three examples, with residual error
concentrated at the sharp vacuum--target boundary.

\begin{figure}[t]
  \centering
  \includegraphics[width=0.95\linewidth]{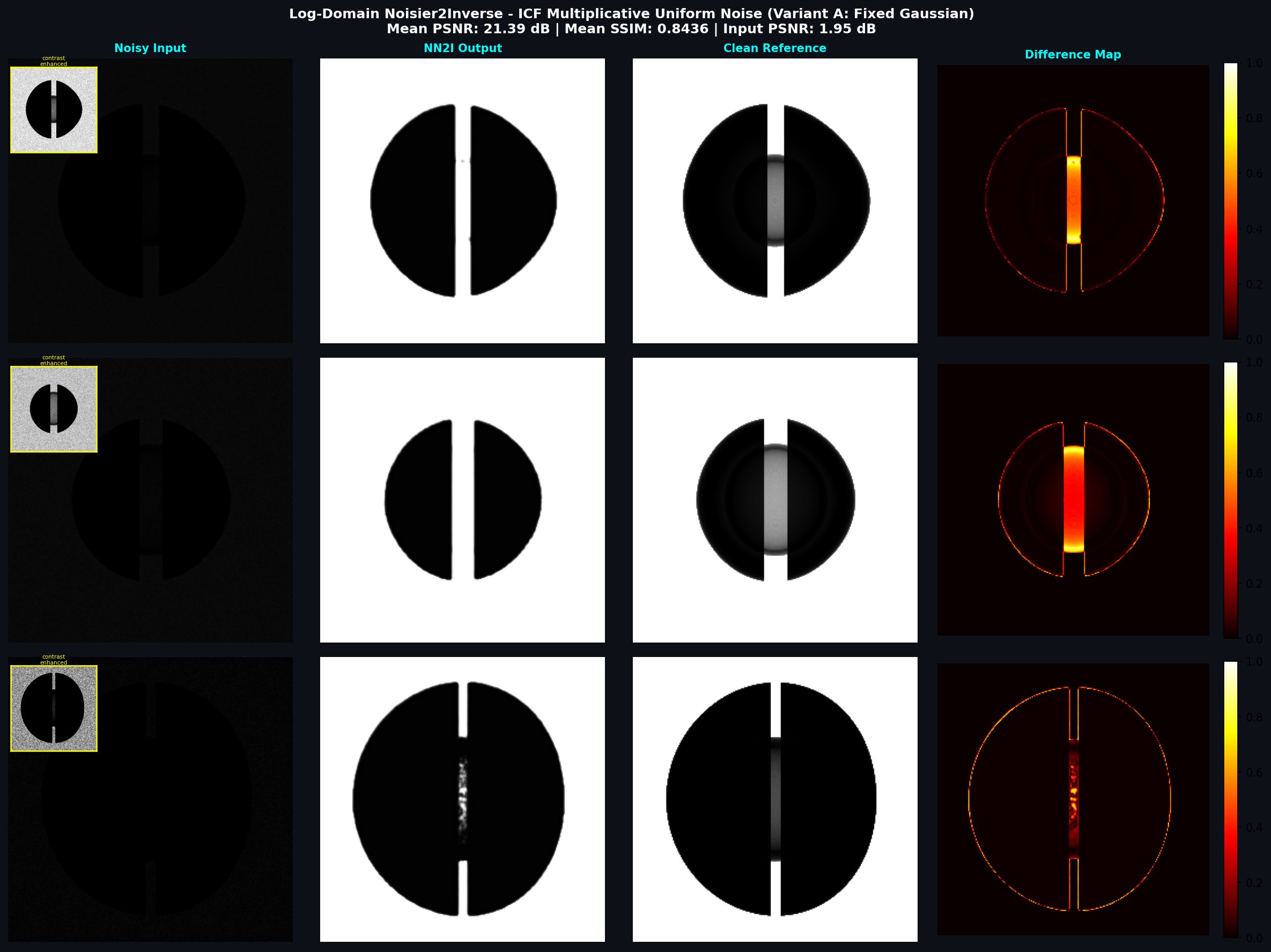}
  \caption{Log-Domain Noisier2Inverse results --- Variant~A (fixed Gaussian noise,
           $\sigma=0.15$). Columns (left to right): noisy input (with
           contrast-enhanced inset), NN2I denoised output, clean reference,
           difference map (yellow/white~$=$~high residual, dark~$=$~low residual).
           Mean PSNR: $21.39\db$, SSIM: $0.8436$, input PSNR: $1.95\db$.}
  \label{fig:n2i_varA}
\end{figure}

\begin{figure}[t]
  \centering
  \includegraphics[width=0.95\linewidth]{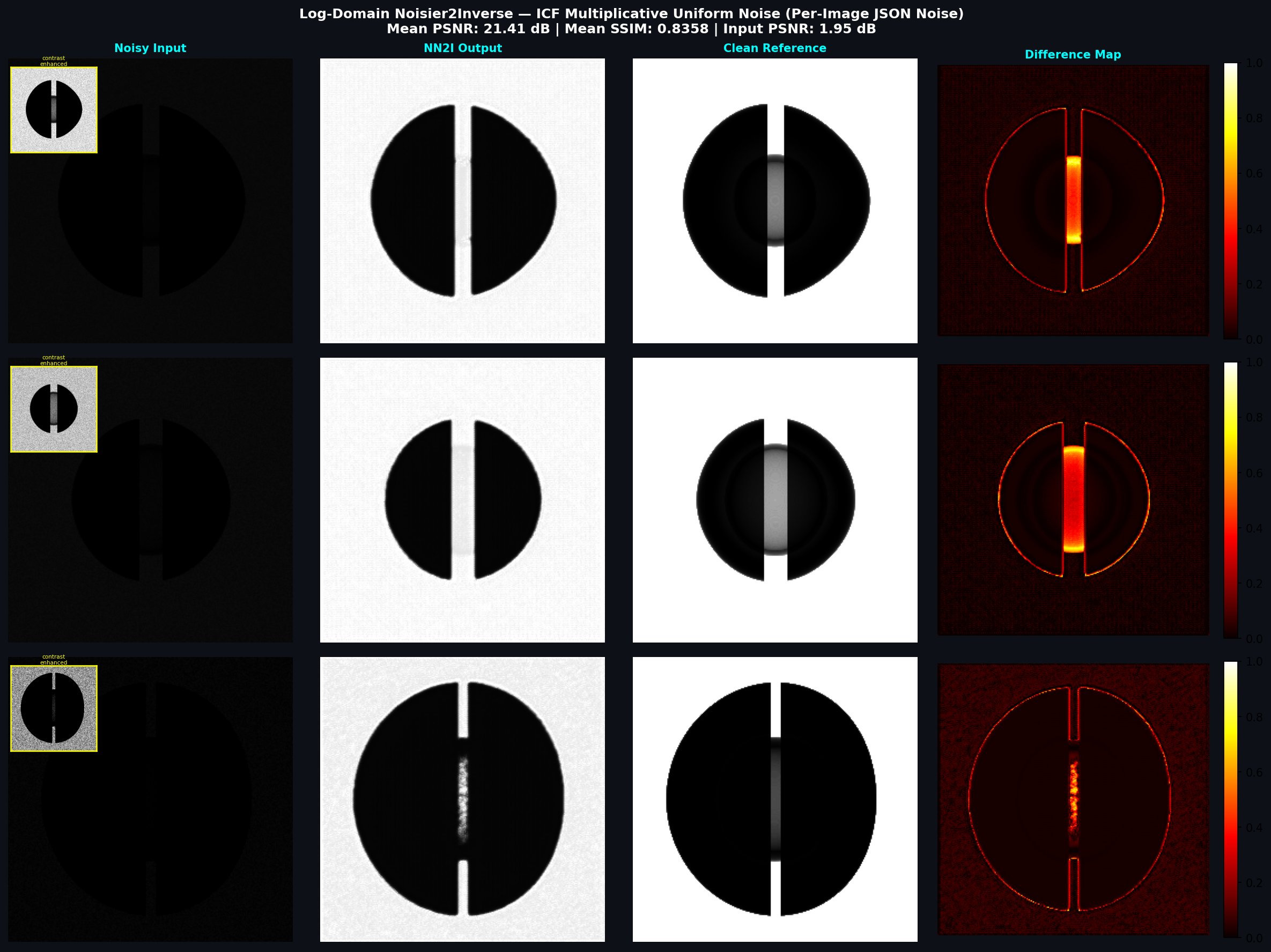}
  \caption{Log-Domain Noisier2Inverse results --- Variant~B (per-image JSON
           Uniform noise loading). Same column layout as Fig.~\ref{fig:n2i_varA}.
           Variant~B achieves the best overall result: mean PSNR $21.41\db$,
           SSIM $0.8358$, input PSNR $1.95\db$, representing a $+19.46\db$
           improvement over the noisy baseline.}
  \label{fig:n2i_varB}
\end{figure}

Figure~\ref{fig:bm3d_results} shows BM3D log-domain qualitative outputs.
Despite preserving the broad circular structure, BM3D fails to recover fine
detail and produces outputs visually indistinguishable from the noisy input
at these noise levels.

\begin{figure}[t]
  \centering
  \includegraphics[width=\linewidth]{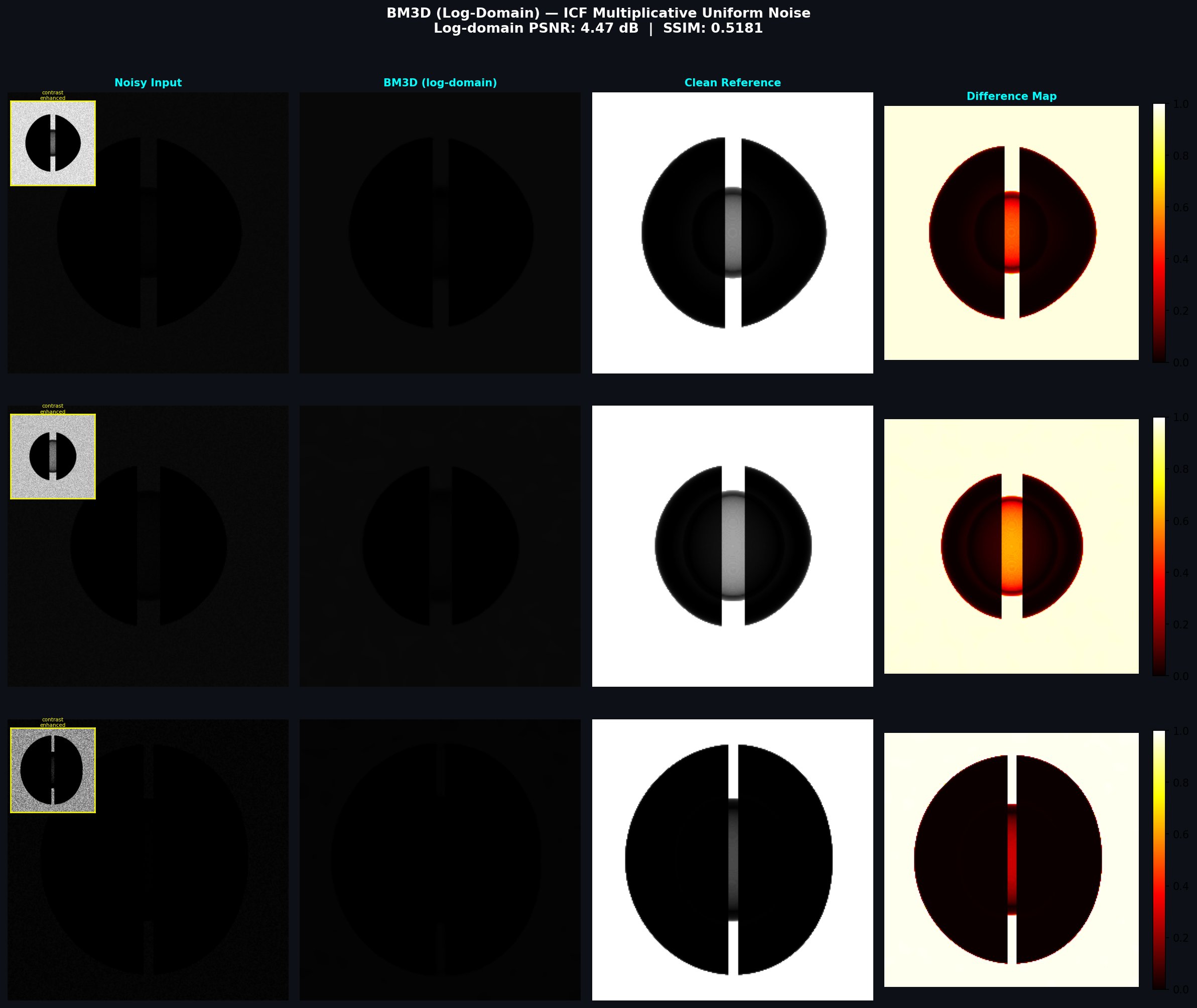}
  \caption{BM3D (log-domain) results on ICF Multiplicative Uniform noise.
           Columns (left to right): noisy input (with contrast-enhanced inset),
           BM3D log-domain denoised output, clean reference, difference map
           (yellow/white~$=$~high residual). Log-domain mean PSNR: $4.47\db$,
           SSIM: $0.5181$. BM3D fails to recover meaningful structure
           despite moderate SSIM values.}
  \label{fig:bm3d_results}
\end{figure}

Figure~\ref{fig:n2s_results} shows Noise2Self outputs. Checkerboard artifacts
dominate every denoised image, confirming that the J-invariant framework fails
when the pixel-independence assumption is violated.

\begin{figure}[t]
  \centering
  \includegraphics[width=\linewidth]{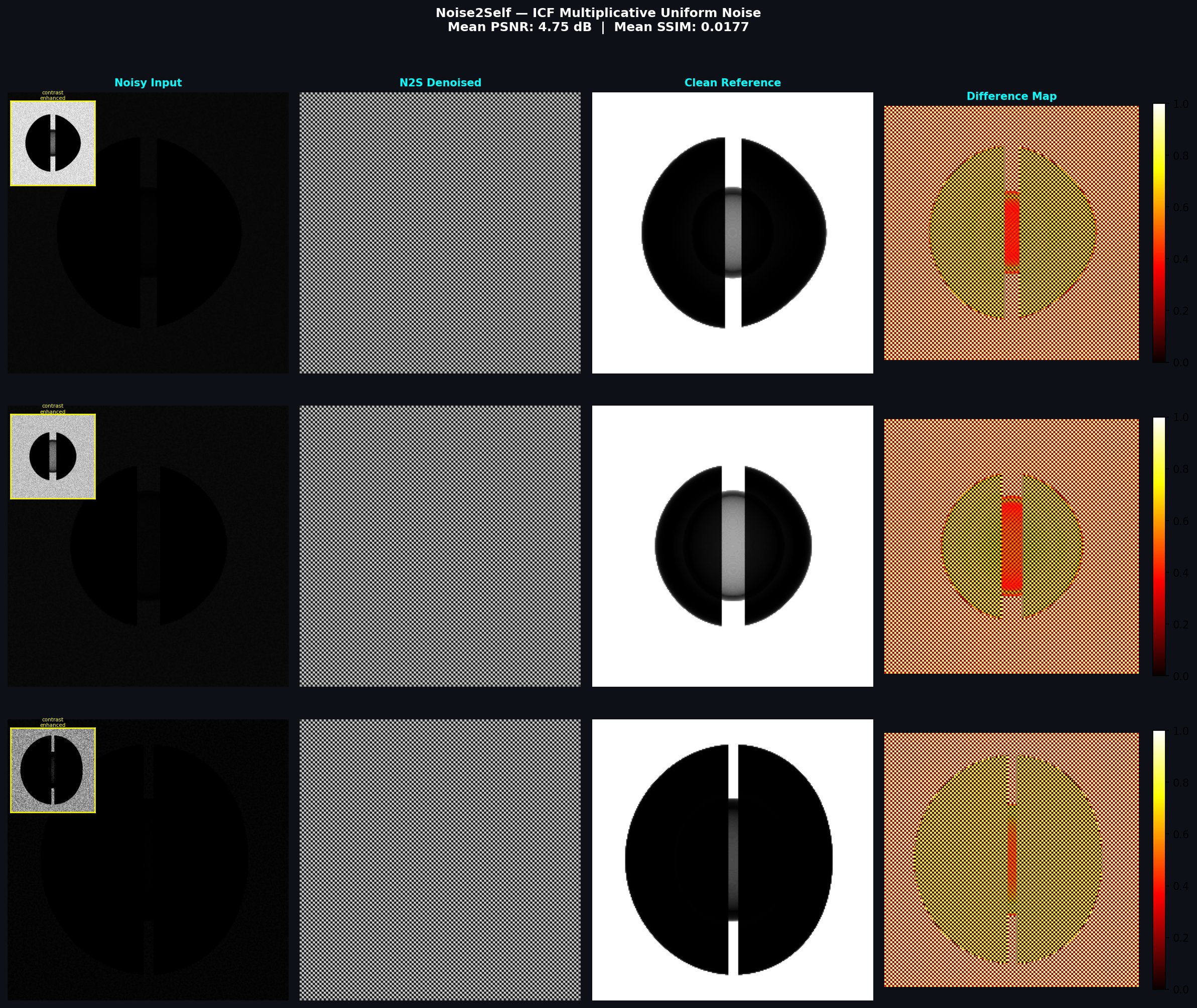}
  \caption{Noise2Self results. Columns (left to right): noisy input (with
           contrast-enhanced inset), N2S denoised output, clean reference,
           difference map (yellow/white~$=$~high residual). Checkerboard
           artifacts dominate the denoised output in all three examples,
           with no structural recovery. Mean PSNR: $4.75\db$, SSIM: $0.0177$.}
  \label{fig:n2s_results}
\end{figure}

\section{Discussion}
\label{sec:discussion}

\subsection{Why the Log-Domain Approach Succeeds}

The log-domain framework (Section~\ref{sec:framework}) succeeds for two reasons. First, the
log transform converts the multiplicative model to additive
(Eq.~\ref{eq:logdom}), enabling the additive Noisier2Inverse framework~\cite{gruber2025nn2i}. Second,
the stabilisation strategy (Eqs.~\ref{eq:safe_log}--\ref{eq:safe_exp}) gracefully
handles vacuum pixels without requiring strict positivity. The $+19.46\db$
improvement without clean training data demonstrates that the log-domain
self-supervised objective provides a meaningful training signal under ICF imaging
conditions.

\subsection{Why Noise2Self Fails}

Noise2Self's failure ($4.75\db$, SSIM $0.0177$) directly results from spatial
correlation ($r=0.99$) violating the J-invariant pixel-independence
assumption~\cite{batson2019noise2self}. This confirms that noise model
compatibility is the primary determinant of denoising performance on
non-standard imaging data.

\subsection{Analysis of BM3D Performance}

BM3D in log-domain achieves moderate SSIM ($0.5181$) but low PSNR ($4.47\db$),
marginally below the noisy input PSNR ($4.60\db$). This reflects BM3D's
tendency to over-smooth: it preserves the broad circular structure (hence moderate
SSIM) while losing quantitative intensity accuracy. BM3D direct ($4.66\db$,
SSIM $0.5338$) performs marginally better, suggesting that for the spatially
correlated Multiplicative Uniform noise in ICF images, the log-domain
transformation alone does not provide the expected benefit for a filter-based
method without learned structure.

\subsection{Applicability to Experimental Images}

A natural question raised by this work is whether the framework trained on
synthetic ICF images generalises to real experimental data. Running an
experimental image through the trained network --- without retraining ---
would provide a direct indication of transferability and practical utility
for operational ICF diagnostics at NIF. The synthetic generation pipeline
(Section~\ref{sec:synth_data}) closely follows the physical forward model
of the GXD detector, suggesting the domain gap may be small; however,
real experimental images may contain additional noise sources, detector
artefacts, or intensity distributions not captured in the synthetic dataset.
We recommend this as an immediate next step: applying the Variant~A or
Variant~B model to an experimental GXD image and comparing the output
qualitatively against available references. If the domain gap proves
significant, fine-tuning on a small number of experimental images with
known noise parameters (if available from detector calibration) would be
a straightforward extension of the per-image JSON loading strategy
already implemented in Variant~B.

\subsection{Remaining Error and Limitations}

Two sources of residual error are identified. First, some error concentrates
at the sharp vacuum--target boundary, where the steep intensity gradient is
difficult for the U-Net decoder to reproduce faithfully. Second, and more
significantly, the inner shell intensity gradient --- the low-contrast grey
region visible in the clean reference --- is not fully recovered in the NN2I
output, which instead produces a high-contrast binary-like reconstruction.
This loss of inner shell detail is not a boundary artefact; it reflects the
network collapsing the subtle intensity variation of the shell to a uniform
value, likely due to the combination of log-domain training, the sigmoid
output activation, and the low signal-to-noise ratio of the inner shell
region relative to the dominant slit feature. Recovering this fine structural
detail may require a loss function that explicitly weights low-contrast
regions, or a perceptual loss term sensitive to local intensity gradients.
Additional limitations include: evaluation against per-image ground truth
obtained from collaborators rather than fully independent test data;
and implicit rather than explicit handling of noise correlation.

\section{Conclusion}
\label{sec:conclusion}

Implementing self-supervised denoising on ICF Multiplicative Uniform noise
required resolving four distinct challenges: log-domain instability at zero-valued
vacuum pixels, highly correlated noise ($r=0.99$), per-image varying parameters,
and training instability.

The Log-Domain Noisier2Inverse framework (Section~\ref{sec:framework}) with numerical stabilisation,
per-image noise loading, and early stopping achieved a best mean PSNR of $\mathbf{21.41\db}$
and SSIM of $\mathbf{0.8358}$ (Variant~B, per-image JSON Uniform noise),
representing a $+19.46\db$ improvement over the noisy input and substantially
outperforming all baselines: Noise2Self ($4.75\db$, SSIM $0.0177$),
BM3D log-domain ($4.47\db$, SSIM $0.5181$), and BM3D direct
($4.66\db$, SSIM $0.5338$). Variant~A (fixed Gaussian noise) achieves
$21.39\db$ PSNR and SSIM $0.8436$, confirming that per-image noise matching
provides a consistent benefit.

\section*{Acknowledgements}
The work of G. Hwang has been supported by the 2025 Yeungnam University Research Grant.

\bibliographystyle{plainnat}

\end{document}